\documentclass[letterpaper, 10 pt, conference]{ieeeconf}
\usepackage{graphicx} % Required for inserting images
\IEEEoverridecommandlockouts
\usepackage{times}
\usepackage{multicol}
\usepackage[bookmarks=true]{hyperref}

\usepackage{hyperref}
\usepackage{graphicx}		% For \includegraphics
\usepackage{wrapfig}
\usepackage[format=plain,font=footnotesize,labelfont=bf,labelsep=period]{caption}
\usepackage[export]{adjustbox}
\usepackage{bm}
\usepackage{subcaption}
\usepackage{balance}
\usepackage{dsfont}
\usepackage{booktabs} 
\usepackage{amsmath}
\usepackage{mathrsfs}
\usepackage{amssymb}

\usepackage{amsthm}
\usepackage{amsthm}
\usepackage{amsfonts}
\usepackage{mathrsfs}  
\usepackage{mathtools}
 \usepackage{tikz} 
\usetikzlibrary{positioning,arrows.meta,shadows.blur,fit}
\usepackage{tikzducks}
\usepackage{duckuments}
\usepackage{mwe} % provides \includegraphics with example-image
\usepackage{mathtools}
\usepackage{relsize}
\usepackage{bbm}
\usepackage{cite}
\usepackage{algorithm}
\usepackage{algpseudocode}

\usepackage{pdfsync}
\usepackage[normalem]{ulem}
\usepackage{paralist}	% For enumerations with better titled numbers
\usepackage[space]{grffile} % Space in filenames

\usepackage[usenames,dvipsnames]{xcolor}
\usepackage{centernot}

\usepackage{graphicx}   % for \scalebox
\usepackage{environ}    % for \NewEnviron

\newtheorem{theorem}{Theorem}
\newtheorem*{theorem*}{Theorem}

\newtheorem*{lemma*}{Lemma}

\theoremstyle{definition}
\newtheorem{definition}{Definition}
\newtheorem{remark}{Remark}

\newtheorem*{proposition*}{Proposition}

\newcommand{\newsec}[1]{\vspace{0.1cm} \noindent \textbf{#1} }

\definecolor{darkblue}{RGB}{0,0,102}
\definecolor{lightblue}{RGB}{77,77,148}

\definecolor{gold}{RGB}{234, 170, 0}
\definecolor{metallic_gold}{RGB}{139, 111, 78}

\newcommand{\mb}[1]{\mathbf{ #1 }}

\DeclareMathOperator*{\argmin}{argmin}

\newcommand{\lmat }{\begin{bmatrix}}
\newcommand{\rmat}{\end{bmatrix}}

\usepackage{dsfont}
 
\usepackage{svg}
\usepackage{caption}

\makeatletter
\long\def\@makefntext#1{%
  \noindent
  #1}
\makeatother

\IEEEoverridecommandlockouts
\usepackage{mathtools}
\usepackage{xcolor}
\usepackage{url}

\newcommand{\re}{\mathbb{R}}
\newcommand{\bq}{\mathbf{q}}
\title{\LARGE \bf Safe-SAGE: \underline{S}ocial-Semantic \underline{A}daptive \underline{G}uidance for Safe \underline{E}ngagement through Laplace-Modulated Poisson Safety Functions
}
\author{Lizhi Yang\textsuperscript{1}, Ryan M. Bena\textsuperscript{1}, Meg Wilkinson\textsuperscript{1}, Gilbert Bahati\textsuperscript{1}, \\ Andy Navarro Brenes\textsuperscript{2}, Ryan K. Cosner\textsuperscript{2}, Aaron D. Ames\textsuperscript{1} \thanks{All authors affiliated with Caltech MCE\textsuperscript{1} and Tufts ME\textsuperscript{2}.}\thanks{ This research is supported in part by the Technology Innovation Institute (TII), BP p.l.c., and by The Dow Chemical Company project \#227027AW.}}
\date{December 2025}

\begin{document}

\maketitle
\begin{abstract}
Traditional safety-critical control methods, such as control barrier functions, suffer from semantic blindness, exhibiting the same behavior around obstacles regardless of contextual significance. 
This limitation leads to the uniform treatment of all obstacles, despite their differing semantic meanings. 
We present Safe-SAGE (Social-Semantic Adaptive Guidance for Safe Engagement), a unified framework that bridges the gap between high-level semantic understanding and low-level safety-critical control through a Poisson safety function (PSF) modulated using a Laplace guidance field. 
Our approach perceives the environment by fusing multi-sensor point clouds with vision-based instance segmentation and persistent object tracking to maintain up-to-date semantics beyond the camera's field of view. 
A multi-layer safety filter is then used to modulate system inputs to achieve safe navigation using this semantic understanding of the environment. This safety filter consists of both a model predictive control layer and a control barrier function layer. 
Both layers utilize the PSF and flux modulation of the guidance field to introduce varying levels of conservatism and multi-agent passing norms for different obstacles in the environment. 
Our framework enables legged robots to safely navigate semantically rich, dynamic environments with context-dependent safety margins.
%while maintaining  safety guarantees.
\end{abstract}
\section{Introduction}

Recent advances in legged locomotion have moved robots from controlled labs into semantically rich, human-centered environments, where safety becomes paramount. 
Established methods, for example artificial potential fields \cite{khatib1986real}, model predictive control (MPC) \cite{zhang2020optimization}, control barrier functions (CBFs) \cite{ames2016control}, and Hamilton-Jacobi (HJ) reachability \cite{herbert2017fastrack} provide rigorous guarantees via forward invariance of a ``safe set,'' yet their real-world utility hinges on how that safe set is constructed.

% To enable this safe deployment, there have been significant achievements in safety-critical control, ranging from artificial potential fields \cite{khatib1986real} and model predictive control (MPC) \cite{zhang2020optimization} to control barrier functions (CBFs) \cite{ames2016control} and Hamilton-Jacobi (HJ) reachability \cite{herbert2017fastrack}.
 %All of t
%Given a predefined safe region of the system's state space, these methods provide rigorous safety guarantees. 
% These methods can be used to guarantee that a system remains in a safe region of its state space. 
% Given a safe region of the system's state space, these methods provide rigorous safety guarantees for the robotic system. %In this work, we employ a combined MPC and CBF approach that benefits from the horizon-long performance optimization of the MPC controller and the real-time computational efficiency of the CBF. 
 % These methods provide rigorous mathematical safety guarantees defined as the forward invariance of a \emph{safe set}. %, i.e., a safe subset of the state space. 
% However, fundamental to the utility of these methods is the construction of this safe set.

Typically, these safe sets are either ``user-defined'' \cite{ames2016control,bansal_hj_2017}, learned from data \cite{qin2021learning,robey2020learning}, or constructed from environmental occupancy maps and geometric primitives \cite{TamasComposing23}. %s constructed with geometric primitives.
Such approaches are ``semantically blind'': a human and a chair of equal geometric volume yield identical safe sets and avoidance behavior. 
This forces controllers to be either universally conservative, degrading performance in clutter, or universally aggressive, risking safety failures. 
Prior state-dependent relaxations \cite{alan2021safe} tune conservatism but require hand-designed rules that cannot autonomously incorporate context.

Incorporating semantics makes safety a context-dependent property beyond simple collision avoidance, e.g. a hospital robot must distinguish walls that permit grazes from patients that demand a wide berth. 
While LLMs and vision-language models (VLMs) let robots reason about such semantics and social norms \cite{ahn2022can, ravichadran2026contextual}, a modality mismatch blocks their direct use for dynamic safety enforcement: these models run at low frequencies with high latency, whereas the safety filter and controller require near-real-time operation.

\begin{figure}
    \centering
    \includegraphics[width=\linewidth]{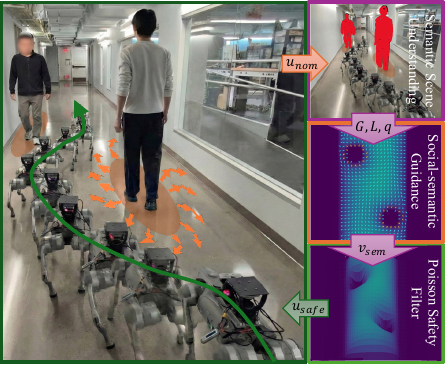}
    \caption{Safe-SAGE in action: A quadruped robot navigates a hallway with humans moving both towards and away from it. The robot successfully avoids collisions with humans and maintains social norms, passing on the left side of the humans.}
    \label{fig:head}
    % \vspace{-20pt}
\end{figure}

We propose a unified framework to bridge this \textit{neuro-symbolic gap}, extending Poisson safety functions (PSFs) \cite{bahati2025dynamic} and Laplace guidance fields (LGFs) \cite{bahati2025risk} into a semantics-aware safety layer between perception and control. 
It is distinguished by two mechanisms: 1) semantic flux modulation that adapts constraint activation near obstacle boundaries, and 2) a rotational LGF component that enables socially-compliant avoidance \cite{mavrogiannis2022social}, guiding safe behavior with variable conservativeness. 
For hardware enforcement, we adopt a layered approach \cite{yamaguchi2026layered}: an MPC safety filter \cite{bena2025geometry} plans over a finite horizon under linearized CBF constraints from the semantic LGF, while a real-time analytical filter with $\sigma$-scaling provides robustness to discrete-grid artifacts and immediate reactivity to dynamic hazards.

\subsection{Contributions}
The main contributions of this work are threefold:
1) a formulation for embedding social biases directly into the Laplace guidance field to enforce directional avoidance norms,
2) a formal analysis of the modified guidance field, and
3) a pipeline for integrating class-aware inflation and semantic flux modulation to generate obstacle-conforming safety and guidance fields from real-time perception.
% 3) An extension of the geometry-aware MPC framework in \cite{bena2025geometry} to include semantic dependency in the prediction horizon, coupled with the real-time safety filter utilizing the same semantics to enforce safety. 

\subsection{Related Work}
The synthesis of semantic reasoning with safety-critical control lies at the intersection of geometric safety and learning for semantics, utilizing Poisson safety functions.

\newsec{Geometric safety} enforces safety by constraining control inputs to render a safe set invariant \cite{ames2016control,bansal_hj_2017,borrelli_mpcBook_2017}. 
Geometry-based constraints must be converted to constraints on the dynamics via the planning horizon in MPC, the backward reachable tube in HJ methods \cite{herbert2017fastrack, bansal2021deepreach}, and high-relative-degree \cite{xiao2021high} or multi-layer synthesis \cite{cohen2024constructive} in CBFs.
% with extensions to handle high-relative-degree \cite{xiao2021high} or stochastic \cite{cosner2024bounding} systems.  
These methods do not distinguish between objects semantically; thus, may lead to universal conservatism or unsafe aggression.
% HJ reachability, on the other hand, guarantees safety by computing the backward reachable tube \cite{herbert2017fastrack,bansal2021deepreach} with variations of using the computed value function as the safety function in the CBF formulation \cite{choi2021robust,tonkens2022refining}. 
% While they are able calculate control invariant sets for complex systems, HJ reachability methods have limited real-time utility due to their large computational requirement and introduce extreme conservatism due to their game-based, worst-case uncertainty models. %, sometimes extending into latent space \cite{nakamura2025train} where the safety conditions are only understood empirically. 
% Recently, some CBF and reachability-based methods have begun incorporating semantic understandings into their safe set synthesis \cite{ravichadran2026contextual,feng2025words}, but the computational complexity of the LLM/VLM used to provide context as well as the reachability problem limits their utility in dynamic environments. 
Our proposed method extends geometric safety methods \cite{bena2025geometry,bahati2025dynamic} by incorporating semantic information using an onboard vision model, thus enabling the online synthesis of \textit{context-aware} safe sets. 

% thereby injecting more intelligence into the safety filter while leveraging the CBF approach, making the safety filter efficient and easy to understand.

\newsec{Safety from semantics} utilizes large vision/language models for robotic safety.
Language-conditioned planning \cite{ahn2022can, huang2023voxposer} grounds semantic instructions to robotic capabilities.
Other works attempt to learn barrier functions directly from data \cite{robey2020learning, long2021learning} or for multi-agent systems \cite{qin2021learning}.
With the advancement of LLMs and VLMs, works such as \cite{brunke2025semantically, hu2025vlsa, ravichadran2026contextual} generate barrier conditions from visual-language queries.
\cite{seo2025uncertainty,  nakamura2025train} extend the notion of safety to hard-to-describe safety conditions by leveraging HJ reachability and latent-space safe sets.
These methods operate largely at the planning layer, ensuring semantic feasibility but often fail to provide execution-level guarantees against dynamic disturbances.
Our approach, through the combination of the MPC and the real-time safety filter, safeguards the robotic system against such scenarios. 

\newsec{Harmonic potentials} \cite{connolly1990path} guarantee local-minima-free navigation, with \cite{doeser2020invariant} investigating invariant sets for integrators using similar principles.
More recent work \cite{bahati2025dynamic} established the Poisson safety function framework, which utilizes as an MPC constraint \cite{bena2025geometry} and \cite{bahati2025risk} further expands it to support context-based risk assignment.
Our work is the first to introduce semantic flux modulation into this architecture, enabling semantic-aware social and safety compliance.

\newsec{Social navigation} requires the robot to adhere to social norms when navigating in human environments \cite{sisbot2007human}.
Early works utilized the social force model \cite{helbing1995social} to simulate pedestrian dynamics.
More recent approaches leverage deep reinforcement learning \cite{chen2019crowd} or topological invariants \cite{mavrogiannis2022social} to learn cooperative behaviors.
Gaussian processes are also used to address the ``freezing robot" problem in dense crowds \cite{trautman2010unfreezing}, and generative models have been used to generate socially plausible trajectories \cite{zhou2025socialtraj}.
However, these methods lack the rigorous safety guarantees of CBFs or require heavy computational resources, which is not ideal for real-time operation. 
Our flux-modulated guidance field naturally embeds social behaviors directly into the real-time safety filter, ensuring both social compliance and rigorous safety.
% \vspace{-14pt}
% \vspace{-2pt}
\section{Background}
\label{sec:background}
% \vspace{-8pt}
% \vspace{-2pt}
\newsec{Reduced-Order Models}
 provide a means of representing a high-dimensional robotic system (with dynamics ${\Dot{\boldsymbol{\zeta}}}=\phi(\boldsymbol{\zeta},\mathbf{u})$, $\boldsymbol{\zeta}\in \mathbb{R}^n$, and $\mathbf{u}\in \mathbb{R}^m$) as a reduced-order system by considering a reduced-order state $\mathbf{q}\in \mathbb{R}^{n_q}$, $n_q < n$, where $\mathbf{q}$ can capture important bulk behavior such as the position of the robot centroid.
 Thus, given the projection $\mathbf{p}: \mathbb{R}^n\rightarrow\mathbb{R}^{n_q}$ that projects the full-order state to the reduced-order state, a % locally Lipschitz continuous 
 feedback control law $\mathbf{u}=\mathbf{k}(\mathbf{q})\in \mathbb{R}^{m_q}$, $\mathbf{k}:\mathbb{R}^{n_q}\rightarrow\mathbb{R}^{m_q}$, and a control interface $\kappa(\boldsymbol{\zeta},\mathbf{u})$ that lifts the reduced-order input $\mathbf{u}$ to the full-order input $\mathbf{u}_{\text{full}}$, the reduced-order state $\mathbf{q}$ satisfies the following \cite{cohen2023safe,cohen2025safety}:
 \begin{align}
    \dot{\mathbf{q}} = \frac{\partial \mathbf{p}}{\partial \boldsymbol{\zeta}} \, \mathbf{\phi}\big(\boldsymbol{\zeta}, \, \mathbf{\kappa}(\boldsymbol{\zeta}, \mathbf{u})\big) \nonumber &\approx \mathbf{f}(\mathbf{q}) + \mathbf{g}(\mathbf{q}) \mathbf{u} 
    \label{eq:cont_reduced_order_sys}
    \\
     &= \mathbf{f}(\mathbf{q}) + \mathbf{g}(\mathbf{q}) \mathbf{k}(\mathbf{q}).
\end{align}
 
\newsec{Safety Filters} enforce safety constraints on the reduced-order system, which can be characterized using a \emph{safe set} $\mathcal{C}$:
\vspace{-5pt}
\begin{align}
\mathcal{C} &:= \{ \mathbf{q} \in \mathbb{R}^{n_q} \mid h(\mathbf{q}) \ge 0 \}, \nonumber\\
\partial \mathcal{C} &:= \{ \mathbf{q} \in \mathbb{R}^{n_q} \mid h(\mathbf{q}) = 0 \}, \label{eq:safesetdef}
\\
\operatorname{int}(\mathcal{C}) &:= \{ \mathbf{q} \in \mathbb{R}^{n_q} \mid h(\mathbf{q}) > 0 \},\nonumber
\end{align}
where the 0-superlevel set of a continuously differentiable function $h:\mathbb{R}^{n_q}\rightarrow\mathbb{R}$ represents the safe subset of the reduced-order state space, $\mathbb{R}^{n_q}$.
Thus, the aim of a safety filter is to enforce the forward invariance of $\mathcal{C}$ \cite{ames2019control}.
\begin{figure*}[htbp]
 \centering
    \includegraphics[width=\linewidth]{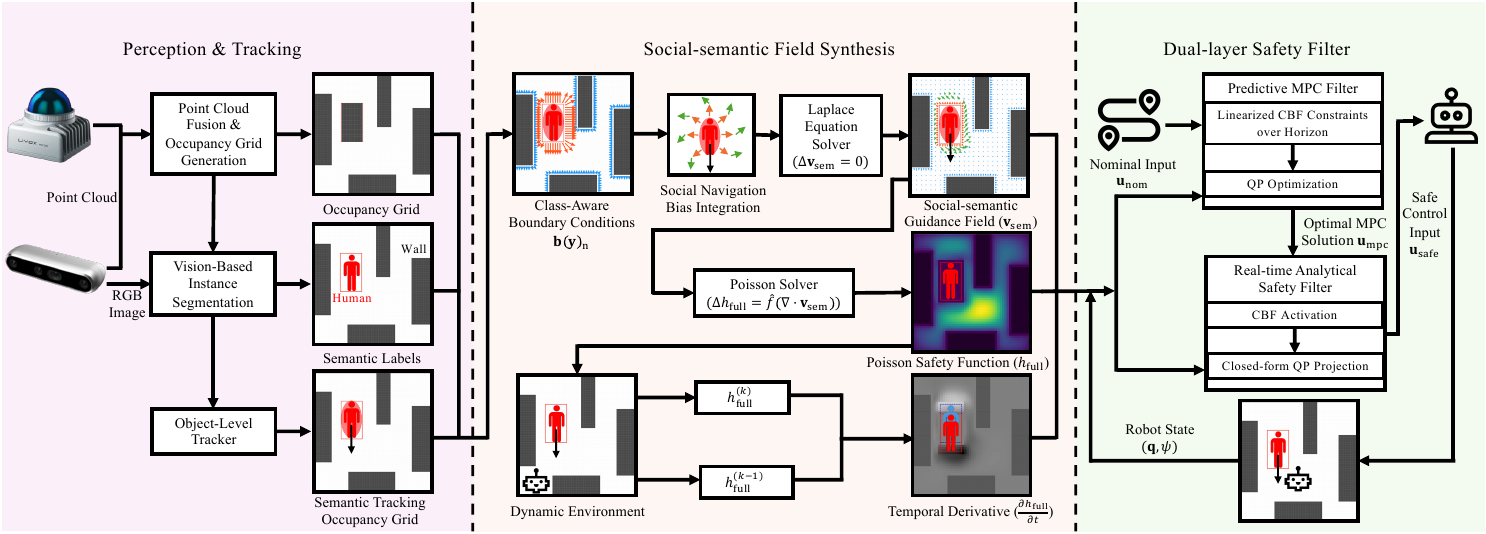}
    \caption{System Architecture: The robot takes in multi-sensor point clouds and RGB images from the camera, performs semantic segmentation and object tracking to build a semantic occupancy grid, and then uses it to generate a social-semantic guidance field and Poisson safety function, then apply it in both real-time and predictive safety filters to ensure safety and social compliance.}
    \label{fig:system}    
    % \vspace{5pt}
\end{figure*}
\begin{definition}[\hspace{-0.05em}\cite{ames2019control}]
A continuously differentiable function $h : \mathbb{R}^{n_q} \rightarrow \mathbb{R}$, satisfying $\nabla h(\mathbf{q})\neq0$ when $h(\mathbf{q})=0$, is a
\emph{control barrier function} (CBF) for \eqref{eq:cont_reduced_order_sys} on $\mathcal{C}$ if there exists
$\gamma \in \mathcal{K}_\infty^e$ such that for all $\mathbf{q} \in \mathbb{R}^{n_q}$, the
following holds:%\pagebreak
\begin{equation}
\sup_{\mathbf{u} \in \mathbb{R}^{m_q}}
\left\{
\underbrace{\nabla h(\mathbf{q}) \mathbf{f}(\mathbf{q})}_{L_\mathbf{f} h(\mathbf{q})}
+
\underbrace{\nabla h(\mathbf{q}) \mathbf{g}(\mathbf{q})}_{L_\mathbf{g} h(\mathbf{q})}\mathbf{u}
\right\}
\ge -\gamma\bigl(h(\mathbf{q})\bigr),
\end{equation}
where $\nabla h$ denotes the gradient of $h$.
\end{definition}
\noindent Inputs $\mb{u}$ satisfying this constraint render $\mathcal{C}$ control invariant \cite[Thm. 2]{ames2019control}, and the following CBF-QP controller ensures safety by minimally modifying a nominal action $\mathbf{k}_{\textup{nom}}$:
\begin{equation}
\label{eq:cbf-qp}
\begin{aligned}
\mathbf{k}_{\mathrm{safe}}(\mathbf{q})
&= \argmin_{\mathbf{u} \in \mathbb{R}^{m_q}}
\;\left\| \mathbf{u} - \mathbf{k}_{\mathrm{nom}}(\mathbf{q}) \right\|_2^2
 \\
\text{s.t.}\quad
& L_\mathbf{f} h(\mathbf{q}) + L_\mathbf{g} h(\mathbf{q})\,\mathbf{u}
\;\ge\; -\gamma\!\left(h(\mathbf{q})\right).
\end{aligned}
\end{equation}
Since real-world sensing and actuation use a zero-order hold, we also employ the discrete-time CBF for systems of the form:
\begin{equation}
    \mathbf{q}_{k+1} = \mathbf{F}(\mathbf{q}_k, \mathbf{u}_k),
    \label{eq:rom}
\end{equation}
\begin{definition}[\hspace{-0.05em}\cite{agrawal_dtcbf_2017}]
A function $h : \mathbb{R}^{n_q} \rightarrow \mathbb{R}$ is a
\emph{discrete-time CBF} (DTCBF) for \eqref{eq:rom} on $\mathcal{C}$ if, for some $\rho \in [0, 1]$ and each $\mathbf{q}\in \mathcal{C}$, there exists a $\mathbf{u}\in \mathbb{R}^{m_q}$ such that: 
\begin{equation}
    h(\mathbf{F}(\mathbf{q},\mathbf{u})) \geq \rho h(\mathbf{q}),
\end{equation}
\end{definition}
The associated discrete-time safety filter is formulated as
\begin{equation}
\label{eq:cbf-qp-discrete}
\begin{aligned}
\mathbf{k}_{\mathrm{safe}}(\mathbf{q})
&= \argmin_{\mathbf{u} \in \mathbb{R}^{m_q}}
\;\left\| \mathbf{u} - \mathbf{k}_{\mathrm{nom}}(\mathbf{q}) \right\|_2^2
 \\
\text{s.t.}\quad
& h(\mathbf{F}(\mathbf{q},\mathbf{u})) \geq \rho h(\mathbf{q}).
\end{aligned}
\end{equation}
\noindent While this may no longer be convex, \cite{taylor2022safety} showed that safety can be preserved under certain convexifying approximations. 

\newsec{Poisson Safety Functions} are a class of functions that can be used to characterize safety with respect to a spatial environment $\mathbf{q}=(x,y,z)\in \mathbb{R}^3$ \cite{bahati2025dynamic,bahati2025risk}.
PSFs represent the safety of any environment with a smooth, open, bounded, connected free space $\Omega$, with $\partial \Omega = \bigcup_{i=1}^{n_o} \partial \Gamma_i$ the surfaces of the $n_o$ occupied regions $\Gamma_i$.
Given an occupancy map (numerical safe set), we generate a PSF by solving a Dirichlet boundary value problem for Poisson's equation \eqref{eq:safesetdef}:
\begin{equation}
\begin{cases}
\Delta h_0(\mathbf{q}) = \hat{f}(\mathbf{q}), & \mathbf{q}\in \Omega, \\
h_0(\mathbf{q}) = 0, & \mathbf{q} \in \partial \Omega,
\end{cases}
\label{eq:poisson_dirichlet}
\end{equation}
where $\Delta = \frac{\partial}{\partial \mathbf{q}} \cdot \frac{\partial}{\partial \mathbf{q}}$ denotes the Laplacian operator, and
$\hat{f} : \Omega \rightarrow \mathbb{R}_{<0}$ is a prescribed forcing term.
It is noted that a smooth forcing function
$\hat{f} \in C^{\infty}(\overline{\Omega})$ gives a smooth
solution $h_0 \in C^{\infty}(\overline{\Omega})$ to
\eqref{eq:poisson_dirichlet} under regularity assumptions on $\Omega$\cite{gilbarg1977elliptic}.
The resultant 0-superlevel set of $h_0$ implicitly defines a safe set $\mathcal{C}$ such that $\Omega = \operatorname{int}(\mathcal{C})$ and  $\partial \mathcal{C} = \partial \Omega$ \cite{bahati2025dynamic}. Under suitable assumptions, the PSF is a CBF, and this constructive method enables the design of safety filters that generate admissible control inputs for systems \eqref{eq:cont_reduced_order_sys} and \eqref{eq:rom}.

\newsec{Guidance-field-based Safety-critical Control} is used to enable spatially varying conservatism around obstacle boundaries using Laplace Guidance Fields (LGFs) by explicitly prescribing safe directionality independent of the safety function's gradient.
Let $\hat{\mathbf{n}}(\mb{q})$ be the outward unit normal for the safe set boundary at $\mb{q}$, where $b(\mathbf{q})$ is the boundary value. A guidance field $\mathbf{v} : \Omega \to \mathbb{R}^3$ satisfies the boundary flux condition
\begin{equation}
\mathbf{v}(\mathbf{q}) = b(\mathbf{q})\,\hat{\mathbf{n}}(\mathbf{q}),
\quad b(\mathbf{q}) < 0,
\quad \mathbf{q} \in \partial\Omega.
\end{equation}
We extend the field smoothly into $\Omega$ by solving the vector Laplace Dirichlet boundary value problem
\begin{equation}
\begin{cases}
\Delta v_i(\mathbf{q}) = 0, & \mathbf{q} \in \Omega, \\
v_i(\mathbf{q}) = b(\mathbf{q})\,\hat n_i(\mathbf{q}), & \mathbf{q} \in \partial\Omega,
\end{cases}
\qquad i \in \{x,y,z\},
\end{equation}
where $v_i\in\mathbb{R}$ represents the components of $\mathbf{v}$.

Although $\mathbf{v}$ is generally non-conservative and not equal to $\nabla h$,
it satisfies\footnote{$^1$ $\parallel$ denotes that the vectors are parallel.}  $\mathbf{v} \parallel \hat{\mathbf{n}}$ on $\partial\Omega$. By
Hopf's Lemma \cite{protter2012maximum}, the outward normal satisfies\footnote{$^2$ $\propto$ denotes proportionality.}
$\hat{\mathbf{n}} \propto \nabla h$ on $\partial\Omega$, enabling the generalized
safety constraint
\begin{equation}
\mathbf{v}(\mathbf{q})^\top \mathbf{k}(\mathbf{q}) \ge -\gamma h(\mathbf{q}),
\end{equation}
which guarantees safety for first-order systems \cite[Prop. 1]{bahati2025risk}.

\section{Method}
We describe our social-semantic safety framework built around LGF-based PSFs, beginning with the system architecture and then detailing each component: perception and tracking, field synthesis, and the dual safety filter.
% \vspace{-1pt}
\subsection{System Architecture}
% TODO maybe talk about actual sensors in the exp section
The proposed system operates as a layered safety architecture positioned between perception and control, as illustrated in Figure~\ref{fig:system}. 
We first fuse multi-sensor point clouds into a robot-centric occupancy grid and use a vision-based segmentation network to identify human instances.
We then deploy an object-level tracker for persistent human identification outside the camera's field of view.
Finally, the resulting semantic occupancy grid feeds into a two-stage field synthesis module, where we solve Laplace's equation with class-aware boundary conditions to construct a semantic guidance field $\mathbf{v_{\text{sem}}}: \mathbb{R}^2 \to \mathbb{R}^2$, and construct a forcing function for solving Poisson's equation to arrive at a safety function $h_{\text{full}}:  \mathbb{R}^2 \times \mathbb{S}^1 \to \mathbb{R}$. 
Both serve as inputs to a dual safety filter architecture operating on a reduced-order model with full state $\boldsymbol{\zeta} = (\mathbf{q}, \psi)$, where $\mathbf{q} = (x, y)$ is the 2D position and $\psi$ is the heading. 
The safety module consists of an analytical filter providing immediate reactive corrections and an MPC filter enforcing predictive CBF constraints. 
This dual-rate structure combines reflexive responsiveness for dynamic environments with anticipatory trajectory shaping for smooth, socially-compliant navigation (Alg.~\ref{alg:semantic_safety}).
{
\setlength{\textfloatsep}{6pt}
\begin{algorithm}[t]
\caption{Safe-SAGE}
\label{alg:semantic_safety}
\begin{algorithmic}[1]
\Require Occupancy grid $\mathcal{G}$, class map $\mathcal{L}$, robot state $(x, y, \psi)$, nominal command $\mathbf{u}_{\text{nom}}$
\Ensure Safe command $\mathbf{u}_{\text{safe}}$

\Statex \textbf{Poisson Safety Function Construction:}
\State Extract LiDAR clusters via connected components
\State Update human tracker with clusters and YOLO labels
\State Label occupied cells using tracked human positions
% \State Inflate $\mathcal{G}$ with class-specific kernels
\State Compute boundary normals $\hat{\mathbf{n}}$ and class-aware flux $b(\mathbf{q})$, construct $\mathbf{v_{\text{sem}}}$ with tangent biasing 
% \For{each yaw slice $q \in \{1, \ldots, Q\}$} \textbf{in parallel}
\State Solve Laplace equation $\Delta \mathbf{v}_{{\text{sem}}} = 0$ to construct $\mathbf{v}_{{\text{sem}}}$
\State Compute forcing function $\hat{f}(\nabla \cdot \mathbf{v_{\text{sem}}})$
\State Solve Poisson equation $\Delta h_{\text{full}} = \hat{f}$ for $h_{\text{full}}$
% \EndFor
\State Update temporal derivative $\partial h/\partial t$ via motion-compensated difference 
% \Statex \textbf{MPC Loop:}
% % \If{MPC enabled and new grid available}
%     \For{$i = 1$ to MAX\_ITERATIONS}
%         \State Update QP cost with $\mathbf{u}_{\text{nom}}$
%         \State Linearize CBF constraints using $\mathbf{v}$ and $\partial h / \partial t$
%         \State Solve QP via OSQP
%         \If{cost residual $< 1.0$} \textbf{break} \EndIf
%     \EndFor
\State $\mathbf{u}_{\text{mpc}} \gets$ MPC solution $\mathbf{u}_0^*$
% \EndIf

% \Statex \textbf{State-Rate Loop (100~Hz):}
% \State Interpolate $h$, $\mathbf{v_{\text{sem}}}$, $\partial h / \partial t$ at $(x, y, \psi)$ via trilinear interpolation
% \State Compute activation $a \gets \gamma h + \mathbf{v_{\text{sem}}}^\top \mathbf{u}_{\text{mpc}} + \sigma\frac{\partial h}{\partial t}$%$ - \text{ISSf}$
% \State Compute $\beta \gets \mathbf{v_{\text{sem}}}^\top P_u^{-1} \mathbf{v_{\text{sem}}}$
% \State Compute $\lambda \gets \frac{-a + \sqrt{a^2 + \sigma b^2}}{2b}$ 
% \State $\mathbf{u_{\text{safe}}} \gets \mathbf{u}_{\text{mpc}} + \frac{-a + \sqrt{a^2 + \beta^2}}{2\beta} P_u^{-1} \mathbf{v_{\text{sem}}}$
\State $\mathbf{u_{\text{safe}}} \gets$ real-time semantics-aware safety filter \eqref{eq:cbf-qp-guidance} %explicit solution

\State \Return $\mathbf{u_\text{safe}}$
\end{algorithmic}
\end{algorithm}
% \vspace{-5pt}
}
\subsection{Semantic Environment Representation}
\label{sec:env_rep}

We build a robot-centric occupancy grid by fusing point clouds from multiple sensors in the robot body frame.
We mask out the robot itself using a simple hyper-elliptical filter and maintain cell occupancy using exponential decay, along with a Gaussian kernel to spatially smooth detections.
An instance segmentation network processes the RGB stream, provides the semantic labels, and projects them into the robot body frame to produce a sparse class map.
However, this is insufficient for persistent human tracking, particularly when individuals move outside the camera's limited field of view.
To address this, we deploy an object-level tracker that associates LiDAR clusters with semantic labels over time. 
We first use connected component analysis \cite{bolelli2019spaghetti} to extract spatial clusters from the occupancy grid and match them to existing detections via a greedy nearest-neighbor association within a tunable gating radius.
The labeled clusters then update their position and velocity estimates using exponential smoothing and decay according to new detections and tracked clusters. 
Labeled clusters that lose association for more than the timeout threshold are pruned.
The downstream safety filters can thus maintain up-to-date semantics even without fresh visual confirmation.

\subsection{Laplace Guidance Field}
% Importantly, with this $B_r$ buffering $\partial \Omega$ is still the safe set boundary. $\partial \Omega_r$ is instead used to introduce social avoidance behavior and does not increase conservatism.

% \subsection{Laplace Guidance Field}
% With the configuration space occupancy map, we prescribe outward-pointing boundary conditions on the obstacle boundaries $\partial\Omega$ with class-dependent magnitudes:
% \begin{equation}
% \mathbf{v_{\text{sem}}}(y) = b(y)\,\hat{\mathbf{n}}(y), \quad y\in\partial\Omega,
% \end{equation}
% where $\hat{\mathbf{n}}$ is the outward unit normal and $b(y)$ is the boundary flux magnitude, with higher flux incentivizing more conservative safety behavior.
%
% The key idea is to construct a guidance vector field that encodes a socially prescribed flow direction.% in the frees space immediately surrounding each obstacle. 
%
% This is achieved by solving a vector Laplace equation with tangential boundary conditions in the obstacle's immediate free space—encoding the desired flow direction—and normal boundary conditions on the obstacle surface—encoding repulsive safety gradients.
% %
% Laplace's equation smoothly interpolates between these, yielding a field that transitions from social compliance far from obstacles to safety enforcement (i.e., collision avoidance) near obstacles. 
To account for the robot's orientation-dependent geometry, we
parameterize the domain with the robot orientation $\psi \mapsto \Omega({\psi})$ as introduced in
% introduce the 2D attitude-dependent free space $\Omega(\psi)$ following 
\cite{bena2025geometry}.
We construct the social-semantic guidance field $\mathbf{v}_\textup{sem} = (v_x, v_y)$ to encode class-dependent social navigation norms by prescribing outward-normal repulsion on obstacle boundaries $\partial \Omega({\psi})$ that have class-dependent magnitude $b$ and a tangential bias on an internal Dirichlet interface $\partial \Omega_r({\psi})$, solved via a vector Laplace equation over free space.
Specifically, let $\Omega({\psi}) \subset \mathbb{R}^2$ be an open, connected domain encoding free space, with a smooth boundary $\partial \Omega({\psi})$ encoding obstacle surfaces.
To define $\partial \Omega_r({\psi})$, we buffer the obstacle boundaries via the Pontryagin difference \cite{bena2025geometry}:
\begin{align}\label{eq: buffered safe set}
    \overline{\Omega}_r(\psi) = \overline{\Omega}(\psi) \ominus B_r, %(\psi),
\end{align}
where $B_r = \left\{ \mathbf{q} \in \mathbb{R}^2 \;\middle|\; \|\mathbf{q}\| < r \right\}$ denotes the open ball of radius $r>0$. We assume $r$ is chosen small enough that the buffered regions surrounding distinct obstacles do not overlap. Under the smoothness assumption on $\partial\Omega({\psi})$ and for such $r$, the buffered boundary $\partial \Omega_r(\psi)$ is smooth and admits well-defined outward unit normals $\hat{\mathbf{n}}_r$ and unit tangents $\hat{\tau}_r$.

On $\partial \Omega_r({\psi})$, tangential boundary conditions encode the desired social flow direction, where the sign convention on $\hat{\tau}$ determines the passing direction (e.g., pass-on-the-right). On $\partial \Omega(\psi)$, outward-normal conditions encode repulsion. Specifically, for each component $i \in \{x, y\}$:
\begin{equation}\label{eq: social laplace}
    \begin{cases}
        \Delta v_i(\mathbf{q}, \psi) = 0, & \mathbf{q} \in \Omega(\psi) \setminus \partial \Omega_r(\psi), \\
        v_i(\mathbf{q}, \psi) = \lambda(\mathbf{q},  \psi) \, \hat{\tau}_i(\mathbf{q,  \psi}), & \mathbf{q} \in \partial \Omega_r(\psi), \\
        v_i(\mathbf{q}, \psi) = b(\mathbf{q}, \psi) \, \hat{n}_i(\mathbf{q},  \psi), & \mathbf{q} \in \partial \Omega(\psi),
    \end{cases}
\end{equation}
where $\lambda(\mathbf{q}, \psi) < 0$ controls the social flow magnitude and $b(\mathbf{q},\psi) < 0$ the repulsion magnitude. Since $\partial \Omega({\psi})$ and $\partial \Omega_r({\psi})$ are smooth and the boundary data is smooth, the Dirichlet problem is well-posed by classical elliptic theory \cite{gilbarg1977elliptic}. The solution is smooth on each subdomain $\mathcal{A}_r({\psi}) = \Omega_r({\psi}) \setminus \overline{\Omega}({\psi})$ and $\Omega_r({\psi})$, continuous across $\partial \Omega_r({\psi})$ by construction, and Lipschitz on $\overline{\Omega}({\psi})$.
These class-dependent knobs shape behavior intuitively: a more negative $b$ yields stronger repulsion, hence a wider margin and earlier activation, and is assigned by class safety-criticality (a larger margin for humans than static objects). The tangential bias $\lambda$ sets the passing side via its sign and, via its magnitude, how strongly social flow dominates away from the boundary; acting off the obstacle surface, it shapes social behavior without weakening the boundary safety constraint.
We hand-tune these per class; principled or learned assignment is left to future work.

\begin{remark}
    As established in \cite{bahati2025dynamic,bahati2025risk}, Laplace guidance fields are non-conservative (i.e., they have a nonzero curl). This is the property that precisely enables the encoding of rotational social flow patterns on the closed curve $\partial \Omega_r({\psi})$.
\end{remark}

\subsection{Poisson Safety Function Construction}

Utilizing the guidance field, we construct a scalar safety function $h_{\text{full}}:\mathbb{R}^2 \times \mathbb{S}^1 \to \mathbb{R}$ over a 3D configuration space comprising position and yaw orientation $\boldsymbol{\zeta} = (\mathbf{q}, \psi)$.
% To account for the robot's orientation-dependent geometry, we
% parameterize the domain with the robot orientation $\psi \mapsto \Omega({\psi})$ as introduced in
% % introduce the 2D attitude-dependent free space $\Omega(\psi)$ following 
% \cite{bena2025geometry}.
% that constructs a different $\Omega$
% for each slice in the yaw axis, 
% The parametrization enables us to define a new lifted domain in
% a higher-dimensional space accounting for the yaw axis: 
% to create the new non-cylindrical domain \cite{bena2025geometry}:
% \begin{equation} \label{eq: lifted domain}
%     \tilde{\Omega} = \bigcup_{\psi \in \mathbb{S}^1} \Omega({\psi}) \times \{\psi\} \subset \mathbb{R}^2 \times \mathbb{S}^1.
% \end{equation}
We can solve for the rotationally-dependent Poisson safety function:
\begin{equation}
    \begin{cases} 
        \Delta h_{\text{full}}(\bq, \psi) = \hat{f}(\bq, \psi), & \forall\bq \in {\Omega}(\psi), \\ 
        h_{\text{full}}(\bq, \psi) = 0, & \forall\boldsymbol{\bq} \in \partial {\Omega}(\psi),
    \end{cases} 
\end{equation}
% where the Laplace operator $\Delta_{\mathbf{q}}$ is taken with respect to the 2D position $\mathbf{q} = (x, y)$. 
% Because the original safe set only defines safety with respect to the translational output $\mathbf{q}$, the yaw orientation $\psi$ does not need to satisfy a boundary condition along the rotational dimension and thus does not appear in the Laplacian. 
We use a smooth negative forcing function $\hat{f}(\bq,{\psi})$ following \cite{bena2025geometry}.
The resulting $h_{\text{full}}$ is smooth, strictly positive in free space according to the strong maximum principle, and zero on boundaries. 
Importantly, the guidance field's class-aware boundary conditions propagate into the forcing function, causing $h_{\text{full}}$ to change more quickly near semantically-critical boundaries compared to less critical obstacles.
% {\color{blue}Since the Laplacian and boundary conditions act only on $\mathbf{q}$ for each fixed $\psi$, we can state the following forward invariance result for each 2D yaw slice $h$ independently on the safe set $\mathcal{C}({\psi}) = \{\mathbf{q} \in \overline{\Omega}({\psi}) ~|~ h(\mathbf{q}) \geq 0\}$.}

% \textcolor{blue}{TODO: Cut theorem}
{\begin{theorem}[Forward Invariance]\label{prop:forward_invariance}
    For a fixed $\psi$, consider the single-integrator system $\dot{\mathbf{q}} = \mathbf{k}(\mathbf{q})$ on $\overline{\Omega}({\psi})$ and the safe set $\mathcal{C}({\psi})$ defined as the $0$-superlevel set of a continuously differentiable function $h_\psi:\overline{\Omega}({\psi}) \to \re$.
 Let $\mathbf{v} : \overline{\Omega}({\psi}) \to \mathbb{R}^2$ be a Lipschitz continuous vector field satisfying \eqref{eq: social laplace}. Then for any locally Lipschitz controller $\mathbf{k} : \overline{\Omega}({\psi}) \to \mathbb{R}^2$ satisfying:
    \begin{equation}\label{eq: risk aware constraint}
        \mathbf{v}(\mathbf{q}) \cdot \mathbf{k}(\mathbf{q}) \geq -\gamma\, h_{\psi}(\mathbf{q}), \quad \forall\, \mathbf{q} \in \mathcal{C}({\psi}),
    \end{equation}
    for some $\gamma > 0$, the set $\mathcal{C}({\psi})$ is rendered forward invariant.
\end{theorem}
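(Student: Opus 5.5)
The proof will be a Nagumo-type (Bony--Brezis) tangency argument: check that the closed-loop vector field $\mathbf{k}$ lies in the tangent cone of $\mathcal{C}(\psi)$ at every boundary point. First, since $\mathbf{k}$ is locally Lipschitz, the system $\dot{\mathbf{q}}=\mathbf{k}(\mathbf{q})$ has unique solutions from every initial condition in $\overline{\Omega}(\psi)$. Nagumo's theorem then applies to the closed set $\mathcal{C}(\psi)$. It says that forward invariance is equivalent to $\mathbf{k}(\mathbf{q})\in T_{\mathcal{C}(\psi)}(\mathbf{q})$ for all $\mathbf{q}\in\partial\mathcal{C}(\psi)$, where $T_{\mathcal{C}(\psi)}(\mathbf{q})$ is the Bouligand tangent cone. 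At interior points the condition is vacuous, so only boundary points matter.

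Second, I identify the geometry of the boundary. As in the Poisson safety function construction, I take $h_\psi$ to be the PSF slice $h_{\text{full}}(\cdot,\psi)$. The strong maximum principle gives $h_\psi>0$ on $\Omega(\psi)$, and the boundary condition gives $h_\psi=0$ on $\partial\Omega(\psi)$. Hence $\mathcal{C}(\psi)=\overline{\Omega}(\psi)$ and $\partial\mathcal{C}(\psi)=\partial\Omega(\psi)$. Because $\partial\Omega(\psi)$ is smooth, the tangent cone at a boundary point is the closed half-space
\begin{equation*}
T_{\mathcal{C}(\psi)}(\mathbf{q})=\{\mathbf{w}\in\mathbb{R}^2 \mid \hat{\mathbf{n}}(\mathbf{q},\psi)\cdot \mathbf{w}\le 0\},
\end{equation*}
with $\hat{\mathbf{n}}$ the outward unit normal, pointing out of free space. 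Equivalently, Hopf's Lemma gives $\nabla h_\psi=-\|\nabla h_\psi\|\hat{\mathbf{n}}$ with $\nabla h_\psi\neq 0$ there, so the standard CBF boundary condition $\nabla h_\psi\cdot\mathbf{k}\ge 0$ is the same requirement.

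Third, I verify the tangency condition from the constraint \eqref{eq: risk aware constraint}. The key point is that only the values of $\mathbf{v}$ on $\partial\Omega(\psi)$ enter. The interior interface $\partial\Omega_r(\psi)$, where the tangential social bias is imposed, lies strictly inside $\Omega(\psi)$ because $r>0$. It therefore does not touch the safe-set boundary, and the Lipschitz continuity of $\mathbf{v}$ makes its boundary trace well defined. At $\mathbf{q}\in\partial\Omega(\psi)$ we have $h_\psi(\mathbf{q})=0$ and $\mathbf{v}(\mathbf{q})=b(\mathbf{q},\psi)\hat{\mathbf{n}}(\mathbf{q},\psi)$ by \eqref{eq: social laplace}. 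The constraint then gives $b\,\hat{\mathbf{n}}\cdot\mathbf{k}\ge 0$. Since $b<0$, this yields $\hat{\mathbf{n}}\cdot\mathbf{k}\le 0$, i.e.\ $\mathbf{k}(\mathbf{q})\in T_{\mathcal{C}(\psi)}(\mathbf{q})$, and Nagumo's theorem concludes forward invariance.

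The main obstacle is the second step. The statement only assumes $h_\psi$ is $C^1$, so I must justify that $\partial\mathcal{C}(\psi)$ coincides with $\partial\Omega(\psi)$ and that the normal used for $\mathbf{v}$ is the normal of the safe set. I will handle this by invoking the PSF properties (maximum principle and Hopf's Lemma) stated in the Background. The half-space tangent cone then follows directly from smoothness of $\partial\Omega(\psi)$, without any separate regularity argument for $\nabla h_\psi$.
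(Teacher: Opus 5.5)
Your proposal is correct and follows essentially the paper's argument. On $\partial\mathcal{C}(\psi)=\partial\Omega(\psi)$ the constraint collapses to $\mathbf{v}\cdot\mathbf{k}\ge 0$; the boundary data $\mathbf{v}=b\,\hat{\mathbf{n}}$ with $b<0$ turns this into the subtangential condition; and Nagumo's theorem concludes. The only difference is cosmetic: the paper converts $\mathbf{v}\cdot\mathbf{k}\ge 0$ into $\dot h_\psi\ge 0$ via $\mathbf{v}\parallel\nabla h_\psi$ (Hopf's Lemma), whereas you check $\hat{\mathbf{n}}\cdot\mathbf{k}\le 0$ directly against the half-space tangent cone of the smooth domain. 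Your route avoids needing $\nabla h_\psi\neq 0$ on the boundary, and it makes explicit the identification $\mathcal{C}(\psi)=\overline{\Omega}(\psi)$ that the paper uses tacitly.
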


\begin{proof}
    On $\partial \mathcal{C}({\psi}) = \partial \Omega({\psi})$, we have $h_{\psi}(\mathbf{q}) = 0$, so the constraint reduces to $\mathbf{v}(\mathbf{q}) \cdot \mathbf{k}(\mathbf{q}) \geq 0$. Since $\mathbf{v}(\mathbf{q}) \parallel \nabla h_{\psi}(\mathbf{q})$ on $\partial \Omega({\psi})$, that is, $\mathbf{v}$ is parallel to $\nabla h_{\psi}$ in the same direction, we obtain (dropping dependence on $\mathbf{q}$ for brevity):
    \begin{equation}
        \mathbf{v} \cdot \mathbf{k} = \frac{\|\mathbf{v}\|}{\|\nabla h_{\psi}\|} \nabla h_{\psi} \cdot \mathbf{k} = \frac{\|\mathbf{v}\|}{\|\nabla h_{\psi}\|} \dot{h}_{\psi} \geq 0.
    \end{equation}
    Since $\|\mathbf{v}\|/\|\nabla h_{\psi}\| > 0$, this implies $\dot{h}_{\psi}(\mathbf{q}) \geq 0$ for all $\mathbf{q} \in \partial \mathcal{C}({\psi})$. Forward invariance of $\mathcal{C}({\psi})$ then follows from Nagumo's theorem \cite{blanchini1999set}.
\end{proof}}
This guarantee holds up to the limits of perception: as the safe set is built from the semantic occupancy grid, segmentation and occupancy errors are the practical bound on safety enforcement.

\begin{figure}[t]
    \centering
    \begin{minipage}{\linewidth}
        \centering
        \includegraphics[width=\linewidth]{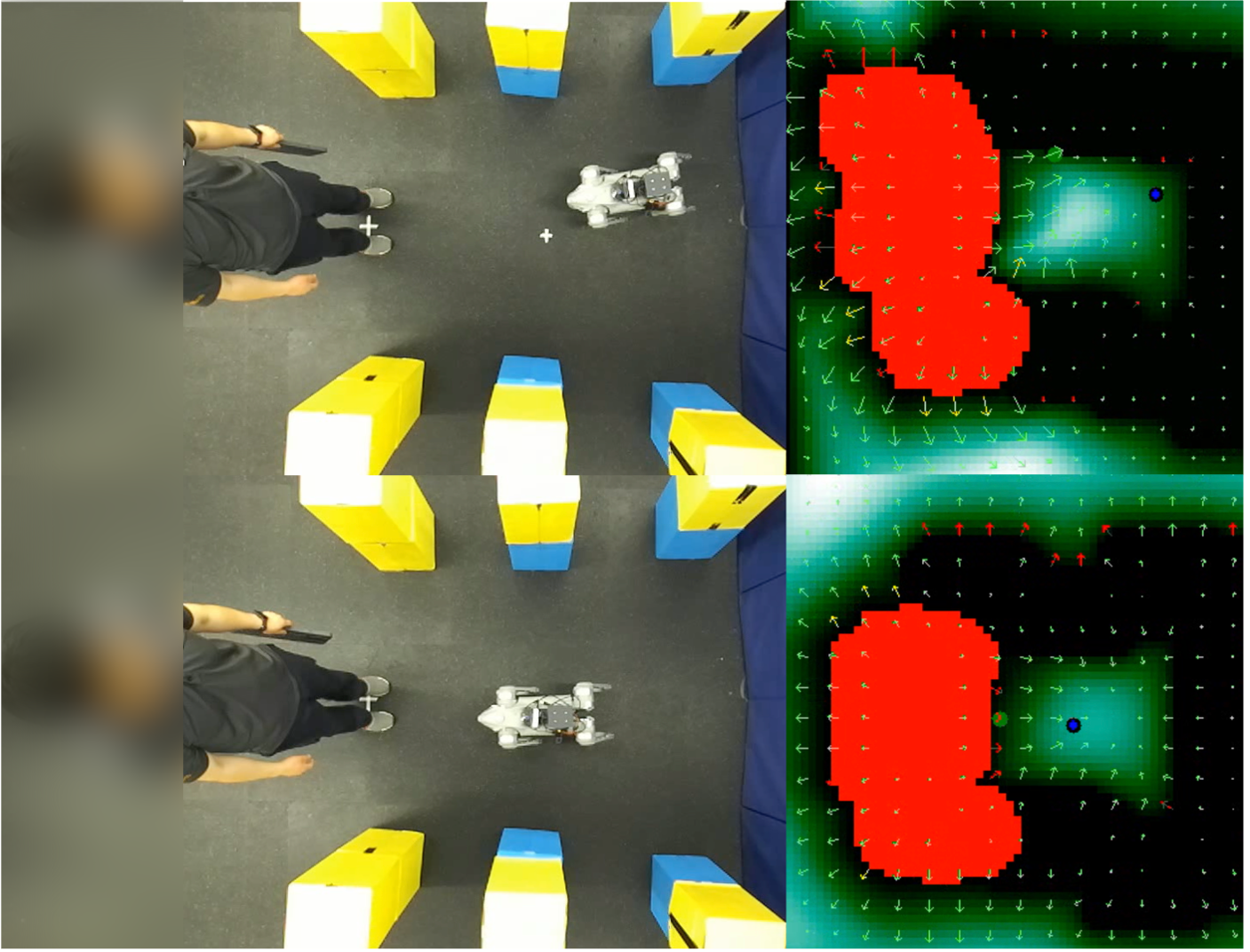}
        \caption{An example of the biased margin induced by our proposed method. With our method enabled ($b_{\text{human}}(\mathbf{q})=-1.7$ and $b_{\text{objects}}(\mathbf{q})=-0.5$), the robot keeps a wider margin to the human (whose occupies states are shown in red) than the walls (in black). While without it ($b_{\text{human}}(\mathbf{q})=-1.0$ and $b_{\text{objects}}(\mathbf{q})=-1.0$), it keeps the same margin.}
        \label{fig:weight_benchmark}
    \end{minipage}

    \vspace{1em}

    \begin{minipage}{\linewidth}
        \centering
        \captionof{table}{Safety Metrics: Human-robot margin is defined as the distance from the human to the center of the enclosed space formed by 3 walls and one human; max lateral offset is defined as the maximum distance to one side of the robot for it still to pass on other side of the human; in both larger is better.}
        \label{tab:safety-ablation}
        \resizebox{\linewidth}{!}{%
            \begin{tabular}{|l|c|c|}
                \hline
                Metric & Proposed Method & Baseline \\
                \hline
                Human-robot Margin (m) & $0.318 \pm 0.0774$ & $-0.008 \pm 0.0625$ \\
                Max Lateral Offset (m) & 0.75 & -0.1 \\
                \hline
            \end{tabular}
        }
    \end{minipage}
    \vspace{-6pt}
\end{figure}
\begin{figure*}[t]
    \centering
    % Add [t] for top alignment
    \begin{minipage}[t]{0.48\textwidth} 
        \vspace{0pt} % Creates a top-anchor point
        \centering
        \includegraphics[width=\linewidth]{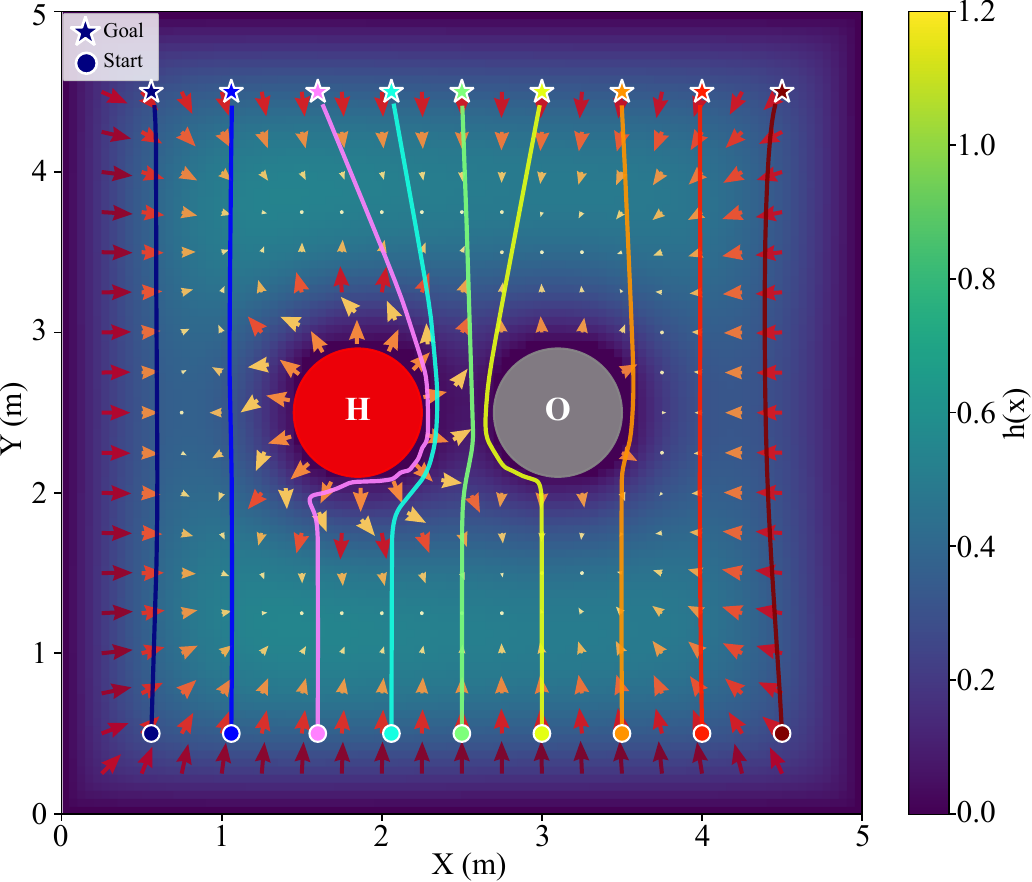}
        \caption{Simulation benchmark of the proposed method safety filtering the robot going to the other side of the arena with a human (H) and a static obstacle (O). It can be observed that the robot would exhibit social compliance unless well away from the human and also keeps a wider berth from the human than the static obstacle. Even where a trajectory passes relatively close to H, it maintains a larger margin than around O (since $b_\text{human} < b_\text{objects}$) and curves with the tangential social bias, so its margin and direction differ from the static-obstacle case.}
        \label{fig:sim_benchmark}    
        % \vspace{-10pt}
    \end{minipage}\hfill
    % Add [t] for top alignment here as well
    \begin{minipage}[t]{0.48\textwidth} 
        \vspace{0pt} % Creates a top-anchor point
        \centering
        \includegraphics[width=\linewidth]{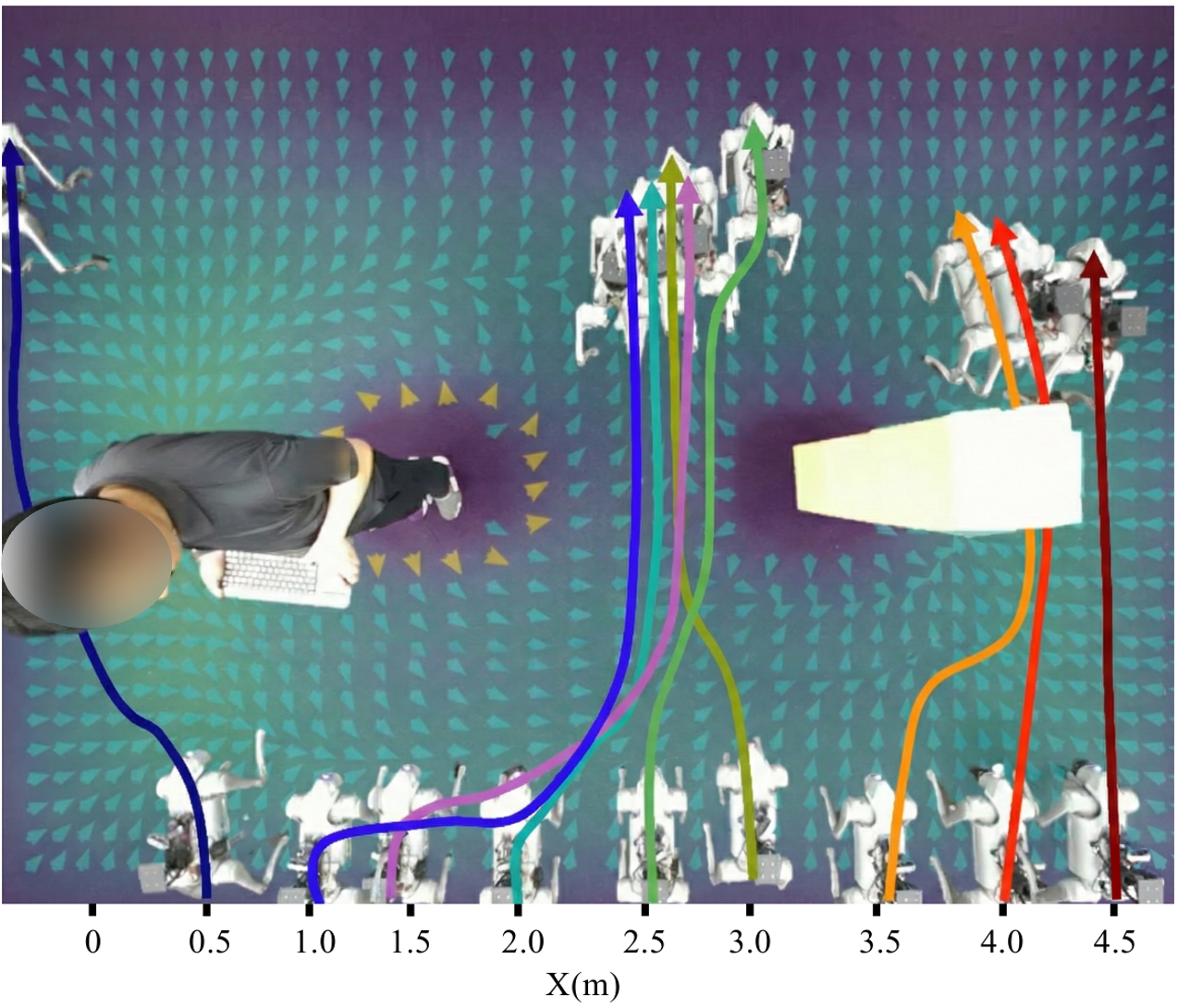}
        \caption{Hardware experiments on the Unitree Go2 quadruped robot. Similar to the simulation benchmark, the robot is tasked with going from one side of the area to the other side. The robot behaves similarly to the simulation, keeping a wider margin to the human and observes social norms whenever possible.}
        \label{fig:social-ablation}    
    \end{minipage}
    \vspace{-6pt}

\end{figure*}
% The theorem demonstrates that although the guidance field $\mathbf{v}$ is Lipschitz continuous—owing to the internal Dirichlet interface $\partial \Omega_r$ encoding social norms—the constraint \eqref{eq: risk aware constraint} is well-defined pointwise, and forward invariance, as presented in \cite{bahati2025risk}, is retained.
%The solution is computed in parallel across all 10 yaw slices using OpenMP.
\vspace{-6pt}
\subsection{Temporal Safety Variation}
% \vspace{-5pt}
We actively estimate a temporal derivative $\partial h_{\text{full}} / \partial t$ to account for dynamic obstacles and perception latency via motion-compensated finite differences:
\begin{equation}
\frac{\partial h_{\text{full}}}{\partial t}(i, j, l) = \frac{h_{\text{full}}^{(k)}(i, j, l) - h_{\text{full}}^{(k-1)}(i + \delta_y, j + \delta_x, l)}{\Delta t_{\text{grid}}}, \nonumber
\end{equation}
where $(\delta_x, \delta_y)$ compensates for robot egomotion between frames and $h_{\text{full}}^{(k-1)}$ and $h_{\text{full}}^{(k)}$ are consecutive safety functions 
% \vspace{-20pt}
solutions separated by time $\Delta t_{\text{grid}}$.
The estimate is smoothed over time using a first-order low-pass filter.

To prevent over-conservatism far from obstacles, we scale $\partial h_{\text{full}} / \partial t$ with the following $\sigma$:
% \vspace{-2pt}
\begin{equation}
\sigma = \min\left(\frac{\|\mathbf{v}_{\text{sem}}\|}{\|\nabla h_{\text{full}}\| + \epsilon \left(1 - e^{-\kappa \max(0, h_{\text{full}})}\right)}, 1\right).
\label{eq:align-fix}
\end{equation}
%with $\epsilon = 0.1$ and $\kappa = 5.0$ (configurable). The scaled temporal term becomes:
This ensures that $\partial h_{\text{full}} / \partial t$ has its full effect near boundaries (where $h_{\text{full}} \approx 0$) but diminishes in open space.
For dynamic obstacles, we rely on a motion-compensated, constant-velocity estimate of $\partial h_{\text{full}} / \partial t$, which is then forward-propagated over the MPC horizon (Sec.~\ref{subsec:mpc}). 
\subsection{MPC Safety Filter}
\label{subsec:mpc}
We construct an MPC optimization problem to plan safe trajectories for the future.
\begin{equation}
\begin{aligned}
\min_{\mathbf{u}_{0:N-1}} \quad & \sum_{k=0}^{N-1} (\mathbf{u}_k - \mathbf{u}_{\text{nom}})^\top P_u (\mathbf{u}_k - \mathbf{u}_{\text{nom}}) \\
\text{subject to} \quad & \boldsymbol{\zeta}_{k+1} = \boldsymbol{\zeta}_k + \Delta t \, \mathbf{u}_k, \\
& v_{x/y} \in [v_{x/y_{\text{min}}}, v_{x/y_{\text{max}}}], \\
& \omega \in [\omega_\textup{min},\omega_\textup{max}], \\
h_{\text{full}}&(\boldsymbol{\zeta}_{k+1}) \geq e^{-\gamma \Delta t} h_{\text{full}}(\boldsymbol{\zeta}_k), \quad k = 0, \ldots, N-1
\end{aligned} \nonumber
\end{equation}
where $\boldsymbol{\zeta}_k = (\mathbf{q}_k, \psi_k)$ is the full state, and the CBF constraint is evaluated using the 2D slice $h(\mathbf{q})$ at each corresponding yaw direction in lieu of the full 3D function $h_{\text{full}}(\boldsymbol{\zeta})$.
We linearize the CBF constraint around the current trajectory using the guidance field $\mathbf{v}$ for the position components and numerical differentiation for the yaw component. 
A Sequential Quadratic Programming (SQP) loop with line search updates the linearization point until the cost residual converges. The time-varying safety function is forward-propagated as $h_{\text{full}}(t_k) = h_{_{\text{full}},0} + \sigma \frac{\partial h_{\text{full}}}{\partial t} t_k$ for each yaw slice, enabling proactive avoidance of closing obstacles.
% The QP is solved via OSQP with warm-starting enabled. If the solver reports primal infeasibility (e.g., due to conflicting constraints near tight corners), it is reset and re-initialized. Both the MPC and real-time filters share the same semantic fields—$h$, $\mathbf{v}$, and $\partial h / \partial t$—ensuring consistent safety behavior across the planning and execution layers.
% \setlength{\textfloatsep}{10pt plus 2pt minus 2pt}
\vspace{-10pt}
\subsection{Realtime Analytical Safety Filter}
For immediate safety enforcement at the state update rate, a closed-form analytical safety filter projects the nominal velocity onto the safe control set.
Given the MPC solution $\mathbf{u}_{\text{mpc}}$ and current state $\boldsymbol{\zeta} = (\mathbf{q}, \psi)$, the safety function and guidance field are evaluated via trilinear interpolation across the 3D grid.
We evaluate the corresponding 2D slice $\psi \mapsto h_\textup{full}(\mb{q}, \psi)$ which yields the semantics-aware safety filter:
% \begin{equation}
% a = \gamma h + \mathbf{v}_\text{sem}^\top \mathbf{u}_{\text{mpc}} + \sigma \frac{\partial h}{\partial t} 
% %+ \frac{\partial h}{\partial \psi} u({\psi})% - \text{ISSf},
% \end{equation}
\begin{align}
\label{eq:cbf-qp-guidance}
    \mb{u}_\textup{safe} = &\argmin_{\mb{u} \in \re^3} \quad \|\mb{u} - \mb{u}_{\text{mpc}}\|_2^2 %\label{eq: risk-aware safety filter}
    \\
    & 
 \mathrm{s.t.} 
 \begin{bmatrix}
      \mathbf{v}_\textup{sem} (\mb{q}, \psi), &  \frac{\partial h_\textup{full}}{\partial \psi}
 \end{bmatrix}\cdot 
  \mb{u} \\
  &\quad \quad  \,+ \frac{\lVert \mathbf{v}_{\textup{sem}}(\mb{q}, \psi)\rVert}{\lVert \nabla  h_\textup{full} \rVert + \sigma(h_\textup{full})} \cdot \frac{\partial{h}_\textup{full}}{\partial{t}}\geq - \gamma h_\textup{full}(\mb{q}, \psi). \nonumber
\end{align}

Although \eqref{eq:cbf-qp-guidance} is class-agnostic in form, the semantic distinction enters upstream: the class-dependent flux $b(\mathbf{q},\psi)$ in \eqref{eq: social laplace} propagates into $\mathbf{v}_\textup{sem}$ and, via the forcing function, into $h_\textup{full}$, so the same filter produces class-dependent margins.

 % \begin{remark}
 %     test
 % \end{remark}
% where ISSf denotes input-to-state safety robustness margins that scale with $\|\mathbf{v}\|$. 
% The safe input is computed using the explicit solution to \eqref{eq:cbf-qp}
% \vspace{-2pt}
% \begin{equation}
% \mathbf{u} = \mathbf{u}_{\text{mpc}} + \frac{-a + \sqrt{a^2 + \beta^2}}{2b} P_u^{-1} \mathbf{v}_\text{sem},
% \end{equation}
% where $\beta = \mathbf{v}_\text{sem}^\top P_u^{-1} \mathbf{v}_\text{sem}$ and $P_u$ weight the inputs.
% whose closed-form expression is:
% % (omitting dependency on $\by$ for brevity):
% %
% % \begin{align}\label{eq: min-norm closed form}
% % \bk_\mathrm{QP} =  \bk_{\mathrm{nom}}+ \frac{\max\{0,-\bvv \cdot \bk_\mathrm{nom} +\gamma h\}}{\|\bvv \|^2}\,\bvv.
% % \end{align}
% %
% \begin{align}\label{eq: k_qp}
%     \bk_{\mathrm{QP}}(\by) = \bk_{\mathrm{nom}}(\by) + \frac{\mathrm{ReLU}(-a(\by))}{\|\bv_\textup{sem}(\by)\|^2}\,\bv_\textup{sem}(\by),
% \end{align}
%We use the ``half-Sontag'' variant (note the factor of 2 in the denominator) because we found full Sontag to be overly aggressive in practice.

\begin{remark}
    We note that the above semantics aware safety filter  modifies $\dot{\psi}$ using $\frac{\partial h_\textup{full}}{\partial \psi}$.
     Due to $\frac{\partial h_\textup{full}}{\partial \psi}$ not being perfectly parallel to the inward pointing normal vector on the boundary of 
    % $\cup_{\psi \in \mathbb{S}} \Omega(\psi)$
        $\tilde{\Omega} = \bigcup_{\psi \in \mathbb{S}^1} \overline{\Omega}   ({\psi}) \times \{\psi\} \subset \mathbb{R}^2 \times \mathbb{S}^1$ \cite{bena2025geometry}
%$\frac{\partial \mathbf{v}_\text{sem}}{\partial \psi}$ 
    , there is a discrepancy in the direction that prevents us from applying Nagumo's theorem. We can accommodate this discrepancy using a robustness term as in the ISSf-CBF literature \cite{kolathaya2018input}. Alternatively, we can scale the  $\frac{\partial h_\textup{full}}{\partial \psi}$ by a similar method to \eqref{eq:align-fix} to maintain CBF-based forward invariance guarantees.    % we can no longer claim forward invariance, to resolve this we can apply a process similar to }%\eqref{}
    We emphasize the scope of this gap: Theorem~\ref{prop:forward_invariance} is a \emph{per-slice} guarantee for each fixed heading $\psi$, so the position-subspace invariance is unaffected and the discrepancy arises only in the coupled heading channel as $\psi$ varies. The two remedies above close this gap, and safety holds across all hardware trials in practice.
\end{remark}
% \subsection{Overall Algorithm}

% Algorithm~\ref{alg:semantic_safety} summarizes the complete semantic safety pipeline. The grid-rate loop (15~Hz) updates the safety function and guidance field, while the state-rate loop (100~Hz) applies the real-time safety filter. The MPC operates asynchronously at 10~Hz.
% Fig. \ref{}

\section{Experiments}
\begin{figure}[t]
    \centering
    \includegraphics[width=0.6\linewidth]{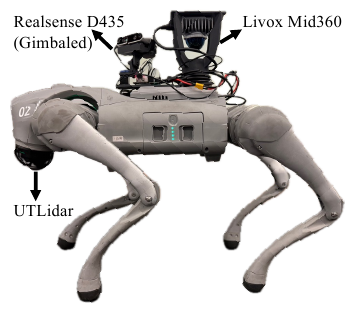}
    \caption{The Unitree Go2 quadruped hardware setup used primarily for experiments. The point clouds from the lidars and the RGB-D camera are aggregated for occupancy map calculation, and the RGB image from the camera is used for semantic understanding.}
    \label{fig:hardware}
    \vspace{-25pt}
\end{figure}
\begin{figure*}[t]
    \centering
    \includegraphics[width=\textwidth]{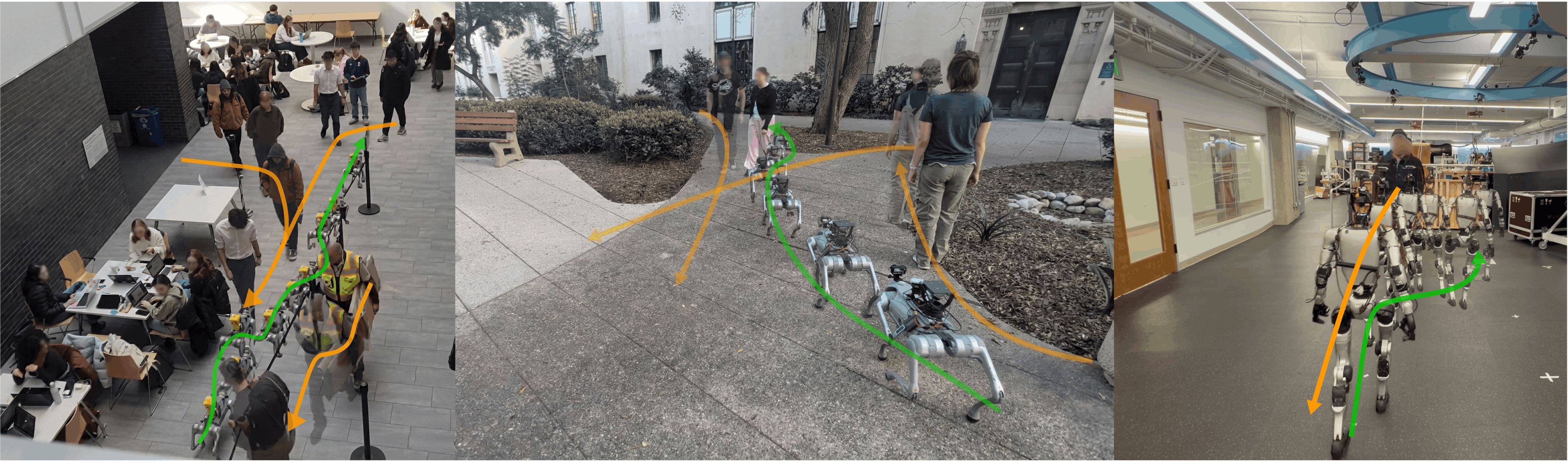}
    \caption{Snapshots of the experiments demonstrating the effectiveness of the proposed method in different environments and on different robots. Left: Safe-SAGE deployed with different hardware setup in cafeteria; middle: Safe-SAGE deployed outside; right: Safe-SAGE deployed on a humanoid robot.}
    \label{fig:snapshots}
    % \vspace{5pt}
\end{figure*}
We assess the social-semantic guidance field modulation in terms of safety and social compliance on the Unitree Go2 quadruped, equipped with a front UTLidar, a top Livox Mid360, and a gimbal-mounted RealSense D435 (Fig.~\ref{fig:hardware}).
The three sensors' point clouds are fused into a robot-centric occupancy grid; the lightweight YOLOv11n \cite{yolo11_ultralytics} performs semantic segmentation on the onboard Jetson Orin NX, and odometry is from FastLIO2 \cite{xu2022fast}.
The Laplace and Poisson equations are solved on a $100\times100$ grid ($0.05$~m, $5\times5$~m) across $10$ yaw slices via a GPU red-black SOR solver. On the onboard Jetson Orin NX, the Poisson solve takes ${\sim}25$--$30$~ms for a total grid loop of ${\sim}50$--$70$~ms (${\sim}15$~Hz); the MPC runs at $100$~Hz and the analytical filter at the odometry rate.

\subsection{Safety and Social Compliance Analysis}
% 1. quantify the sim results with hardware
% 2. success rate w/wo social semantic flux modulation when people corner it to wall

We construct the social-semantic compliance benchmark, requiring the robot to navigate through a gap between a human and a static obstacle, starting at different positions along the $x$-axis.
As can be seen in Fig. \ref{fig:social-ablation}, the behavior of the robot closely matches that of the simulation in Fig. \ref{fig:sim_benchmark}.

Here, the nominal command $\mathbf{u}_\text{nom}$ minimally modified by the filter is the operator's velocity command.
We also perform ablation studies by comparing and quantifying our proposed guidance field modulation with the nominal guidance field, which has no importance differentiation and no tangent bias.
This nominal field is still a Poisson safety filter, so it avoids collisions but lacks the class-dependent margin and social passing our modulation adds.
In this case, we set up a U-shaped corridor with a human pushing the robot towards a wall.
% \begin{figure}
%     \centering
%     \includegraphics[width=0.65\linewidth]{figures/hardware (cropped) (pdfresizer.com).pdf}
%     \caption{The Unitree Go2 robot with a gimbaled Realsense D435, a Livox Mid360 Lidar and an UTLidar}
%     \label{fig:placeholder}
% \end{figure}
During the experiments, the human stops and the free space is enclosed, trapping the robot; thus, the safe controller is forced to stabilize in the closed safe set.
We set $b_{\text{human}}(\mathbf{q})=-1.7$ and $b_{\text{objects}}(\mathbf{q})=-0.5$ for the proposed safety filter while setting $b_{\text{human}}(\mathbf{q})=-1.0$ and $b_{\text{objects}}(\mathbf{q})=-1.0$ for the nominal safety filter, with the resulting margins demonstrated in Fig.~\ref{fig:weight_benchmark}.

For importance differentiation, we measure the human-robot margin (deviation from the center of the enclosed free space, negative toward / positive away from the human); for tangent biasing, we show the maximum lateral offset to pass on the far side of the human.
As can be seen in Table \ref{tab:safety-ablation} our proposed guidance field modulation achieves higher metrics in both categories.
\subsection{Real-World Deployment Scenarios}
% Hallway test
We evaluate Safe-SAGE in three scenarios (Fig.~\ref{fig:head}, Fig.~\ref{fig:snapshots}): a hallway with pedestrians walking toward and away from the robot, an open area with sensory noise, and a crowded cafeteria. Across these, the robot navigates while observing social norms and maintaining safety and social compliance in dynamic, complex environments.
As our method is platform-agnostic and operates on a reduced-order model, it transfers to other robots with minimal changes to sensor inputs and control interfaces; we thus also deploy it on the Unitree G1 humanoid (only a forward-facing D435 with the two lidars), as in Fig.~\ref{fig:snapshots}.
% \vspace{-5pt}
\section{Conclusion}
In this paper, we propose Safe-SAGE, a unified framework that injects social-semantic awareness into robot safety filters.
We extended Poisson safety functions (PSFs) \cite{bahati2025dynamic} and Laplace guidance fields (LGFs) \cite{bahati2025risk} into a semantics-aware safety layer between perception and control, provided a formal analysis of the guidance field modulation, and validated it in simulation and on the Unitree Go2 quadruped and G1 humanoid. The modulation significantly improves the robot's ability to maintain safety and social compliance, distinguishing semantically different obstacles and observing social norms.
Our safety layer operates on a first-order reduced-order model; extension to higher-order systems is a natural next step via a tracking controller on learned full-order dynamics \cite{yang2025shield} or high-order CBF constructions.
Looking forward, we plan to add occupancy-grid memory and semantic graph construction, incorporate large language model reasoning, and target more complex social scenarios.
% \clearpage\newpage

\newsec{Acknowledgments}
The authors used ChatGPT and Claude Code for grammar and editing of the text, for debugging experiment code, and have reviewed and verified all AI-generated content for correctness.
\balance
\bibliographystyle{IEEEtran}
\bibliography{references,Cosner}
% \begin{figure}
%     \centering
%     \includegraphics[width=\linewidth,angle=270]{figures/IMG_7904.jpeg}
%     \caption{Caption}
%     \label{fig:placeholder}
% \end{figure}

\end{document}